\definecolor{asterix_blue}{HTML}{4f87e6}
\definecolor{deepblue}{rgb}{0,0,0.5}
\definecolor{pythonbuiltincolor}{RGB}{228,86,73}
\definecolor{functioncolor}{RGB}{0, 84, 120}
\definecolor{objectcolor}{RGB}{80, 161, 79}
\definecolor{numbercolor}{RGB}{221, 136, 68}
\definecolor{deepgreen}{rgb}{0,0.5,0}
\definecolor{halfgray}{gray}{0.55}
\definecolor{ipython_frame}{RGB}{207, 207, 207}
\DeclareFixedFont{\ttb}{T1}{txtt}{bx}{n}{8.5} 
\DeclareFixedFont{\ttm}{T1}{txtt}{m}{n}{8.5} 
\lstdefinelanguage[]{iPython}[]{python}{
    keepspaces=true,
    showspaces=false,
    showstringspaces=false,
    breaklines=true,
    basicstyle=\small\ttm,
    commentstyle=\color{cyan}\ttm,
    stringstyle=\color{deepgreen},
    keywordstyle = [1]{\ttb\color{pythonbuiltincolor}},
    keywordstyle = [2]{\ttb\color{functioncolor}},
    keywordstyle = [3]{\ttb\color{objectcolor}},
    morekeywords = [1]{class, def, self, return},
    morekeywords = [2]{__init__, forward},
    morekeywords = [3]{EquivalentEffectQNetwork},
    columns=fullflexible,
    literate=%
        {0}{{{\color{numbercolor}0}}}1
        {1}{{{\color{numbercolor}1}}}1
        {2}{{{\color{numbercolor}2}}}1
        {3}{{{\color{numbercolor}3}}}1
        {4}{{{\color{numbercolor}4}}}1
        {5}{{{\color{numbercolor}5}}}1
        {6}{{{\color{numbercolor}6}}}1
        {7}{{{\color{numbercolor}7}}}1
        {8}{{{\color{numbercolor}8}}}1
        {9}{{{\color{numbercolor}9}}}1
}
\definecolor{blue_forward}{HTML}{6520A4}
\definecolor{green_backward}{HTML}{417505}
\DeclareMathOperator*{\argmin}{arg\,min}
\theoremstyle{plain}
\newtheorem{theorem}{Theorem}[section]
\theoremstyle{definition}
\newtheorem{definition}[theorem]{Definition}
\newtheorem{assumption}[theorem]{Assumption}
\theoremstyle{remark}
\icmltitlerunning{Using Forwards-Backwards Models to Approximate MDP Homomorphisms}
\begin{document}

\twocolumn[
\icmltitle{Using Forwards-Backwards Models to Approximate MDP Homomorphisms}



\icmlsetsymbol{equal}{*}

\begin{icmlauthorlist}
\icmlauthor{Augustine N. Mavor-Parker}{UCLAI}
\icmlauthor{Matthew J. Sargent}{UCLAI}
\icmlauthor{Christian Pehle}{CSHL}
\icmlauthor{Andrea Banino}{deepmind}
\icmlauthor{Lewis D. Griffin}{UCLCS,equal}
\icmlauthor{Caswell Barry}{CDB,equal}
\end{icmlauthorlist}

\icmlaffiliation{UCLAI}{Centre for Artificial Intelligence, University College London, UK}
\icmlaffiliation{UCLCS}{Computer Science, University College London, UK}
\icmlaffiliation{CSHL}{Cold Spring Harbor Laboratory, NY, USA}
\icmlaffiliation{CDB}{Department of Cell and Developmental Biology, University College London, UK}
\icmlaffiliation{deepmind}{Google Deepmind, London, UK}

\icmlcorrespondingauthor{Augustine N. Mavor-Parker}{a.mavor-parker@cs.ucl.ac.uk}

\icmlkeywords{Machine Learning, ICML}

\vskip 0.3in
]
\begin{abstract}
Reinforcement learning agents must painstakingly learn through trial and error what sets of state-action pairs are value equivalent---requiring an often prohibitively large amount of environment experience. MDP homomorphisms have been proposed that reduce the MDP of an environment to an abstract MDP, enabling better sample efficiency. Consequently, impressive improvements have been achieved when a suitable homomorphism can be constructed a priori---usually by exploiting a practitioner's knowledge of environment symmetries. We propose a novel approach to constructing homomorphisms in discrete action spaces, which uses a learnt model of environment dynamics to infer which state-action pairs lead to the same state---which can reduce the size of the state-action space by a factor as large as the cardinality of the original action space. In MinAtar, we report an almost 4x improvement over a value-based off-policy baseline in the low sample limit, when averaging over all games and optimizers.
\end{abstract}

\section{Introduction}
Reinforcement learning (RL) has achieved superhuman performance on previously unsolvable benchmarks such as Go \citep{silver2016mastering} and Starcraft \citep{vinyals2019grandmaster}. However, the computational expense of RL hinders its deployment in the real world. RL can demand hundreds of millions of samples to learn a policy, either within an environment model \citep{hafner2020mastering} or by direct interaction \citep{badia2020agent57}. While sample efficiency is improving, there is a gap between the best performing agents and those that are sample efficient \citep{schwarzer2023bigger,  fan2023learnable}.
\begin{figure}
\centering
\includegraphics[width=0.5\textwidth, trim={2cm 2.89cm 0 0.95cm}, clip]{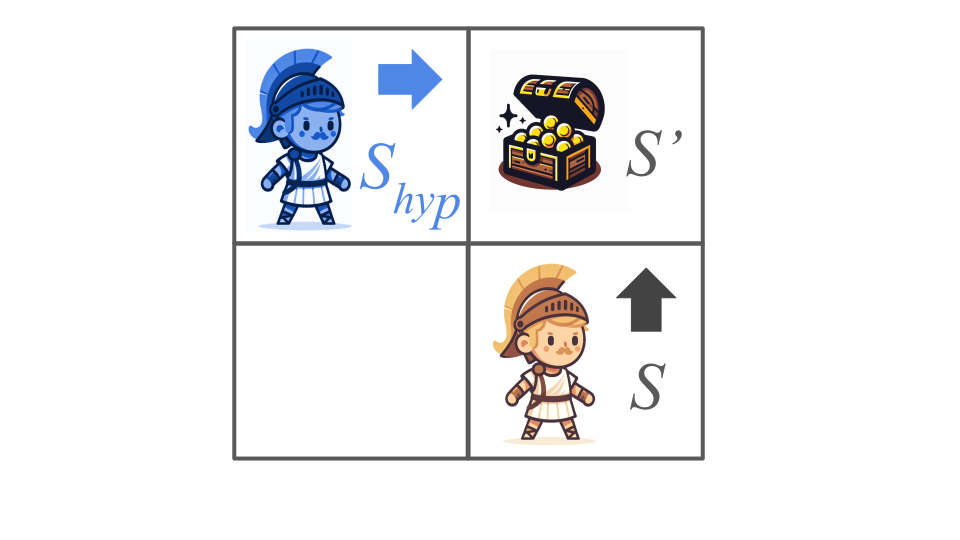}
\caption{We predict equivalent state-action pairs with dynamics models. In Asterix, an agent needs to collect gold. To predict the value of state $S$ and action \textit{move up}, we first predict forward in time to get $S'$. We then predict backwards assuming the previous action was the alternative \textit{\textcolor{asterix_blue}{move right}}, to obtain a hypothetical state-action pair for our Q-network (\textcolor{asterix_blue}{$S_{hyp}$}, \textcolor{asterix_blue}{\textit{move right}}). Icons generated with Dall-e 3 \cite{betker2023improving}.}
\label{fig:eea_intuition}
\end{figure}

Humans abstract away details about state-action pairs that do not effect their values. Consider the Atari game Asterix, where an agent collects gold while avoiding enemies in a 2D navigation task---a human gamer understands that collecting treasure when approaching from below has the same value as collecting the same treasure but approaching from the right (figure \ref{fig:eea_intuition}), which allows them to progress more quickly by generalising value predictions between equivalent state-action pairs. In this work, we identify this abstraction as a Markov decision process (MDP) homomorphism \citep{ravindran2001symmetries, van2020mdp}. MDP homomorphisms collapse equivalent state-actions in an observed MDP onto a smaller abstract state-action space in an abstract MDP \citep{van2020mdp}.

Given a mapping to an abstract MDP, policies can be learned efficiently in the smaller abstract space and then mapped back to the experienced MDP when interacting with the environment \citep{ravindran2001symmetries}. Previous works hard code homomorphisms into a policy \citep{van2020mdp} but developing approaches that learn homomorphic mappings online from experience is an active area of research \citep{rezaei2022continuous, keurti2023homomorphism}.

We present \textit{equivalent effect abstraction}, a method for constructing MDP homomorphisms from experience via a partial dynamics model of state transitions (without needing to predict rewards)---leveraging the fact that state-action pairs leading to the same next state frequently have equivalent values. We use this insight in off-policy value-based RL, which is the most sample efficient class of RL algorithms for discrete control \cite{schwarzer2023bigger}. Off-policy RL predicts separate values for different actions in a given state. We improve sample efficiency further by only requiring a value function to predict the value of a single action selected by the practitioner, which we call the \textit{hypothetical action}, in a separate state called a \textit{hypothetical state}. The hypothetical state is computed by first moving through a forwards dynamics model with a given state-action pair. Then, we move backwards through a model that predicts what hypothetical state would lead to the same next state as the original state-action pair, given the hypothetical action was selected. 

Consider again the Asterix example---moving to a given state has the same value whether your previous action was right, left, up or down. Thus, equipped with a dynamics model, we show that if the value of approaching a state from the right is known, one can also deduce the value of approaching the same state from the left, below, or above. Consequently, equivalent effect abstraction extrapolates value judgements between equivalent state-action pairs, reducing the amount of experience required to learn a policy. We make the following contributions.

\begin{enumerate}
    \item We develop a novel approach for learning a reduced MDP without knowledge of symmetries, and prove that, under certain assumptions, our mapping to the reduced MDP is a homomorphism.
    \item We confirm these theoretical results in an idealised tabular setting, where we show equivalent effect abstraction improves the planning efficiency of model-based RL and the sample efficiency of model-free RL.
    \item In the deep RL setting, we test our approach when the homomorphism is approximate due to imperfect learned models---we show an improvement upon standard baselines on Cartpole and Predator-Prey environments (which are MDP homomorphisms benchmarks from \citet{van2020mdp}).
    \item In the MinAtar suite \citep{young19minatar}, where the assumptions required for our theoretical results are further violated (especially relating to stochasticity), we find an almost 4x improvement over a value based baseline at 250k frames (when averaging all results over different optimizers and games). 
\end{enumerate}
We are not the first to exploit the equivalence of state action pairs that lead to the same next state. \citet{tesauro1995temporal} use the concept of afterstates (states immediately before an opponents response to an action \citep{antonoglou2021planning}) in their backgammon program. This is similar to our approach and has been scaled within MuZero to other board games \citep{antonoglou2021planning}, but in general afterstates have not found wider applicability. \citet{misra2020kinematic}'s kinematic inseparability is a related but distinct abstraction that uses contrastive learning to classify whether two states will lead to the same next state for the \textit{same} action. In contrast, our approach looks for state-action pairs (with \textit{different actions}) that lead to the same next state, which is a broader form of abstraction. Another difference is that our equivalent effect abstraction does not require states to be discrete, which makes it readily applicable to common RL environments where it is rare to reach the same state twice. Finally \citet{anand2015asap} defines a state-action equivalence that is more comprehensive than the equivalence defined by \citet{ravindran2001symmetries}, which is satisfied by various equivalent state-action pairs like those found in \citet{ravindran2001symmetries, van2020mdp, misra2020kinematic} as well as the state-action pairs satisfying the equivalent effect abstraction we present. However, they assume a tree of discrete states can be searched over for equivalences, which is challenging in deep RL.
\section{Equivalent Effect Abstraction}
\subsection{Preliminaries}\label{HMDP}
Using the definition from \citet{sutton2018reinforcement}, an MDP $\mathcal{M}$ can be described by a tuple $\left< \mathcal{S}, \mathcal{A}, \mathcal{P}, \mathcal{R}, \gamma\right>$ where $\mathcal{S}$ is the set of all states, $\mathcal{A}$ is the set of all actions, $\mathcal{R}=\mathbb{E}[ R_{t}|S_{t-1}=s, A_{t-1}=a ]$ is the reward function that determines the scalar reward received at each state, $\mathcal{P} = \text{Pr}\{S_{t}=s'|S_{t-1}=s, A_{t-1}=a\}$ is the transition function of the environment describing the probability of moving from state to another for a given action and $\gamma \in [0, 1]$ is the discount factor describing how much an agent should favour immediate rewards over those in future states. An agent interacts with an environment through its policy $\pi(a|s)=\text{Pr}\{A_{t}=a|S_{t}=s\}$ \citep{sutton2018reinforcement}, which maps the current state to a given action. An RL agent learns a policy that maximises the expected return $G_{t}$, equal to the sum of discounted future rewards $G_{t}=\mathbb{E}_{\pi}[\sum_{t=0}^{T}\gamma^{t}R_{t+1}]$, where $t$ is the timestep and $T$ is length of a learning episode \cite{sutton2018reinforcement}. 
\newline
\newline
This work focuses on improving the sample efficiency of value-based reinforcement learning algorithms. Value-based algorithms predict the expected returns of a given agent---often by learning Q-functions $Q_{\pi}(s,a)=\mathbb{E}[G_{t}|S_{t}=s, A_{t}=a]$ \cite{sutton2018reinforcement}. The amount of samples required to learn a Q-function depends, in part, on the size of the state action space for which it needs to accurately predict values. Equivalent effect abstraction aims to reduce the number of state-action pairs a value function needs to model by leveraging knowledge from learned transition models. Recent work in the model-based RL literature has used backwards model to gain more supervision for latent representation learning \cite{yu2021learning, yu2021playvirtual}. Here, we find another use for forwards-backwards models---to find state-action pairs with similar values.
\subsection{Transition Models}
Equivalent effect abstraction requires two transition models: a forward transition model 
$f_{\theta}: \mathcal{S} \times \mathcal{A} \rightarrow \mathcal{S}$, which is parameterised by $\theta \in \mathbb{R}^{D_{f}}$ where is $D_{f}$ is the number of parameters; and a backwards transition model $b_{\phi}: \mathcal{S} \times \mathcal{A} \rightarrow \mathcal{S}$, which is parameterised by $\phi \in \mathbb{R}^{D_{b}}$ where is $D_{b}$ is the number of parameters in the backwards model. These two models are trained to model the forwards and backwards dynamics of the environment. More explicitly, $f_{\theta}$ and $b_{\phi}$ are trained on collected experience tuples $\left< {s}_{t}, a_{t}, {s}_{t+1}\right>$ to minimize a supervised learning objective.
\begin{equation}
    \argmin_{\theta} \|f_{\theta}(s_{t}, a_{t}) - s_{t+1}\|^{2}_{2}
\end{equation}
\begin{equation}
    \argmin_{\phi} \|b_{\phi}({s}_{t+1}, a_{t}) - {s}_{t}\|^{2}_{2}
\end{equation}
In words, $f_{\theta}$ predicts what state the agent transitions into after taking action $a_{t}$ in state $s_{t}$, while $b_{\phi}$ predicts what state the agent has come from if it has just taken $a_{t}$ and ended up in $s_{t+1}$. Next, we explain how these models are used to extract a smaller MDP than the one experienced, by reducing the action space to an action we call the \textit{hypothetical action}. 
\begin{definition}
    The hypothetical action $a_{hyp}$ is a single action $a_{hyp}\in\mathcal{A}$ selected by the practitioner before training.
\end{definition}
When predicting values, equivalent effect abstraction maps state-action pairs into equivalent hypothetical state-action pairs using the transition models defined and a predetermined hypothetical action (see algorithm \ref{alg:example}). Before introducing the reduced MDP in section \ref{homomorphism_proof_section}, we first define the assumptions that our mapping relies on, which is then used to support our theoretical results. In section \ref{assumptions_discussion}, we discuss the limitations introduced by our assumptions. In section \ref{experiments}, we investigate empirically how equivalent effect abstraction behaves when these assumptions are relaxed. 
\subsection{Assumptions}
\begin{assumption}\label{ass:one}
    Transitions are deterministic, meaning for every state-action pair $(s, a)$, the transition dynamics are $\mathcal{P}(s, a, s')=1$ for a single next state $s'\in \mathcal{S}$ and $\mathcal{P}(s, a, s'')=0$, $\forall s'' \in \mathcal{S}$ where $s'' \neq s'$.
\end{assumption}
This assumption means that we can predict what state $s'$ a state-action pair $(s, a)$ will transition into with certainty. Conversely, it also means we can predict what state an agent could have just come from given a previous action $a$ and the current state $s'$. This is useful, because in the construction of our reduced MDP, we select one hypothetical action $a_{hyp}$ for our reduced action space and then we map all state action-pairs into this action's reference frame when predicting values. That is, we map a given state action pair $(s, a)$ to a hypothetical state-action pair $(s_{hyp}$, $a_{hyp})$ (see algorithm \ref{alg:example}). For this mapping to be generally computable, it is also required that the following assumption holds. 
\begin{assumption}\label{ass:three}
    There exists an action $a_{hyp}\in\mathcal{A}$, such that for all $s, s' \in \mathcal{S},$ $a \in \mathcal{A}$ there exists $s_{hyp} \in \mathcal{S}$ such that 
    $\mathcal{P}(s_{hyp}, a_{hyp}, s')=\mathcal{P}(s, a, s').$
\end{assumption}
This means for every state-action pair, at least one equivalent hypothetical state-action pair can be computed. This is often true but it is not guaranteed. For example, near borders in a grid world there is no way to travel left to a border state that has a border on its right. As a result, there is no mapping for a hypothetical action of move left if the current state sits to the right of a border. We address the consequences of this in section \ref{assumptions_discussion}.

\begin{assumption}\label{ass:two}
    Rewards only depend on the state the agent transitions into $\mathcal{R}(s_{hyp}, a_{hyp}, s')=\mathcal{R}(s,a, s')$.
\end{assumption}
State-action pairs that lead to the same state have equivalent rewards in the majority of RL testbeds such as DeepMind control \citep{tassa2018deepmind}, classic control \citep{brockman2016openai} and Atari \citep{bellemare2013arcade}. While common MDP formulations (including section \ref{HMDP}) define reward functions as dependent on the previous action, it is often possible to drop the dependence on the previous action. Nevertheless, we highlight that we assume no dependence on the previous action, so our approach is not suitable for optimal control settings where different actions have different costs \citep[p. 145]{liberzon2011calculus}.
\subsection{Equivalent effect abstraction provides a MDP homomorphism}\label{homomorphism_proof_section}
Given assumptions [\ref{ass:one}-\ref{ass:two}] and transition models, we can map an MDP $\mathcal{M}$ to a smaller hypothetical MDP $\mathcal{M}_{hyp}$.
\begin{definition}\label{definition:EEA}
The hypothetical MDP $\mathcal{M}_{hyp}$ induced by equivalent effect abstraction is a tuple $\mathcal{M}_{hyp}=\left<\mathcal{S}, \mathcal{A}_{hyp}, \mathcal{P}_{hyp}, \mathcal{R}_{hyp}, \gamma\right>$ where $\mathcal{S}_{hyp} = \mathcal{S}$, $\mathcal{A}_{hyp} = \{a_{hyp}\}$, $\mathcal{R}_{hyp}=\mathcal{R}$ and $\mathcal{P}_{hyp}(s, a, s') = \mathcal{P}(b_{\phi}(f_{\theta}(s, a), a_{hyp}), a_{hyp}, s')$, which simplifies to $\mathcal{P}_{hyp}(s, a, s')=\mathcal{P}(s_{hyp}, a_{hyp}, s')$.
\end{definition}
Equivalent effect abstraction maps all actions in the action space to one hypothetical action, while mapping a given state to an equivalent hypothetical states in the original state space (algorithm \ref{alg:example}). By finding an equivalent hypothetical state-action pair for each experienced state-action pair, we reduce the number of state-action pairs that an off-policy algorithm must learn to estimate. In the ideal case where all assumptions are satisfied, this reduces the state-action space a value function needs to model from $\mathcal{S}\times\mathcal{A}$ to $\mathcal{S}\times1$. 
\begin{definition}
An MDP homomorphism \citep{ravindran2004algebraic, van2020mdp, rezaei2022continuous} is a tuple of surjections $h=\left< \sigma_{a}, {g_{s}|\{(s, a) \in (\mathcal{S} \times \mathcal{A})}\}\right> $ that maps from one MDP $\mathcal{M}$ to another MDP $\bar{\mathcal{M}}$ where:
\begin{equation}\label{eqn:transition_equivalence}
    \bar{\mathcal{P}}(\sigma_{a}(s), g_{s}(a), \sigma_{a}(s'))=\sum_{s''\in [s']_{B_{h}|\mathcal{S}}}\mathcal{P}(s, a, s'')
\end{equation}
\begin{equation}\label{eqn:reward_equivalence}
    \bar{\mathcal{R}}(\sigma_{a}(s), g_{s}(a), \sigma_{a}(s'))=\mathcal{R}(s, a, s'')
\end{equation}
The homomorphism splits the state space $\mathcal{S}$ in subsets of states---$[s']_{B_{h}|\mathcal{S}}$ is the subset of states that are in the same subset as $s'$ \cite{ravindran2004algebraic, rezaei2022continuous}. Equations (\ref{eqn:transition_equivalence}) and (\ref{eqn:reward_equivalence}) show that a homomorphic map maintains the transition dynamics and reward functions of the experienced MDP within the constructed abstract MDP. 

Note that we have made an extension to the definition of \citet{ravindran2004algebraic}, as our definition uses state mappings $\sigma_{a}$ that are dependent on actions, not just state inputs. This naturally encompasses previous definitions of MDP homomorphisms as the mapping $\sigma_{a}$ can simply ignore input actions to be consistent with \citet{ravindran2004algebraic}. This has no effect on the optimal value equivalence property (theorem \ref{theorem:value_equivalence}), as the proof from \citet{ravindran2004algebraic} does not rely on $\sigma$ only being a function of states only. 
\end{definition}
\begin{theorem}\label{theorem:EEA_is_homomorphism}
    Given assumptions [\ref{ass:one}-\ref{ass:two}] and exact models of forwards and backwards dynamics, the MDP mapping provided by equivalent effect abstraction (definition \ref{definition:EEA}) is an MDP homomorphism. 
\end{theorem}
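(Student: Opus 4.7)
The plan is to instantiate the homomorphism tuple directly from Definition \ref{definition:EEA} and verify the two commutative-diagram conditions (\ref{eqn:transition_equivalence}) and (\ref{eqn:reward_equivalence}) by substitution. I would set $g_s(a) = a_{hyp}$ for every $(s,a) \in \mathcal{S}\times\mathcal{A}$, which is a trivial surjection onto $\mathcal{A}_{hyp} = \{a_{hyp}\}$, and set the action-indexed state map $\sigma_a(s) = b_\phi(f_\theta(s,a), a_{hyp})$, which under exactness of the models and Assumption \ref{ass:one} recovers the hypothetical state $s_{hyp}$ whose existence is guaranteed by Assumption \ref{ass:three}. Surjectivity of $\sigma_a$ onto $\mathcal{S}_{hyp} = \mathcal{S}$ reduces to one more application of Assumption \ref{ass:three}: every target state admits a hypothetical pre-image under $a_{hyp}$, and an exact backwards model returns that pre-image.

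For the transition equivalence (\ref{eqn:transition_equivalence}), I would first observe that Assumption \ref{ass:one} collapses the right-hand sum: the block $[s']_{B_h|\mathcal{S}}$ reduces to a singleton and $\mathcal{P}(s,a,s'')=1$ at the unique successor $s' = f_\theta(s,a)$ and is zero elsewhere. On the left, unfolding Definition \ref{definition:EEA} gives $\mathcal{P}_{hyp}(\sigma_a(s), a_{hyp}, \sigma_a(s')) = \mathcal{P}(s_{hyp}, a_{hyp}, \sigma_a(s'))$, and Assumption \ref{ass:three} combined with determinism tells us that $(s_{hyp}, a_{hyp})$ transitions with probability one to the same $s'$ that $(s,a)$ does. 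The two sides then agree provided $\sigma_a$ is read as the identity on the post-transition slot, which is consistent with the paper's extension of Ravindran's definition that allows $\sigma$ to depend on actions and therefore to behave differently on the initial and successor states of a transition.

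The reward equivalence (\ref{eqn:reward_equivalence}) then follows almost immediately. Definition \ref{definition:EEA} stipulates $\mathcal{R}_{hyp} = \mathcal{R}$, and Assumption \ref{ass:two} forces $\mathcal{R}(s_{hyp}, a_{hyp}, s') = \mathcal{R}(s, a, s')$ because both transitions terminate at the same successor. Substituting $\sigma_a(s) = s_{hyp}$ and $g_s(a) = a_{hyp}$ recovers the homomorphism's reward condition without additional work.

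I expect the main obstacle to be the bookkeeping around $\sigma_a$ on the successor state and the partition $[s']_{B_h|\mathcal{S}}$, since the abstraction lives purely on the action axis while the state spaces coincide ($\mathcal{S}_{hyp} = \mathcal{S}$). Making this precise requires explicitly invoking the paper's extension of the homomorphism definition to action-dependent state maps and explaining why the partition block $[s']_{B_h|\mathcal{S}}$ collapses to a singleton under exact models and Assumption \ref{ass:one}. Once that subtlety is dispatched, the three assumptions shoulder the burden cleanly: Assumption \ref{ass:one} removes the stochastic sum, Assumption \ref{ass:three} supplies the hypothetical pre-image and hence surjectivity, and Assumption \ref{ass:two} delivers the reward equation, so the remainder of the proof is a mechanical unwinding of Definition \ref{definition:EEA}.
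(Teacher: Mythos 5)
Your proposal follows essentially the same route as the paper's own proof: substitute the maps $g_s(a)=a_{hyp}$ and $\sigma_a(s)=b_\phi(f_\theta(s,a),a_{hyp})$ into the homomorphism conditions, use Assumption \ref{ass:three} to identify the hypothetical pre-image, use Assumption \ref{ass:one} to collapse the sum over the block $[s']_{B_h|\mathcal{S}}$, and obtain the reward condition directly from $\mathcal{R}_{hyp}=\mathcal{R}$ and Assumption \ref{ass:two}. If anything you are slightly more careful than the paper, since you explicitly address surjectivity of $\sigma_a$ and the treatment of the successor-state slot, which the paper's proof leaves implicit.
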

\begin{proof}
    proof in appendix \ref{proof:homomorphism}.
\end{proof}
As equivalent effect abstraction is an MDP homomorphism (theorem \ref{theorem:EEA_is_homomorphism}), it inherits the value equivalence property.
\begin{theorem}\label{theorem:value_equivalence}
    \citep{ravindran2004algebraic} The optimal action-value function $Q^{*}$ in an MDP implied by an MDP homomorphism is equal to the optimal action-value function in the full MDP.
    \begin{equation}
        Q^{*}(\sigma_{a}(s), g_{s}(a)) = Q^{*}(s, a) 
    \end{equation}
\end{theorem}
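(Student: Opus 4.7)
The plan is to establish the equivalence via finite induction on the iterates of the Bellman optimality operator in both MDPs and then pass to the limit, exploiting that both $Q^{*}$ and $\bar{Q}^{*}$ are the unique fixed points of $\gamma$-contractions on bounded function spaces. This is the standard route for proving that abstractions respect optimal values, and it cleanly isolates the places where the homomorphism conditions (\ref{eqn:transition_equivalence}) and (\ref{eqn:reward_equivalence}) are invoked.

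Initialize $Q_{0} \equiv 0$ and $\bar{Q}_{0} \equiv 0$; trivially $Q_{0}(s,a) = \bar{Q}_{0}(\sigma_{a}(s), g_{s}(a))$. For the inductive step, assume the equality holds for iterate $k$ and expand $Q_{k+1}(s,a)$ using the Bellman optimality equation in $\mathcal{M}$. I would then (i) partition the sum over next states $s''$ into an outer sum over equivalence classes $[s']_{B_{h}|\mathcal{S}}$ and an inner sum over representatives of each class; (ii) apply (\ref{eqn:reward_equivalence}) to rewrite $\mathcal{R}(s,a,s'')$ as $\bar{\mathcal{R}}(\sigma_{a}(s), g_{s}(a), \sigma_{a}(s'))$ which is constant on each block; (iii) use the inductive hypothesis on $\max_{a'} Q_{k}(s'', a')$ together with surjectivity of $g_{s''}$ to rewrite it as $\max_{\bar{a}'} \bar{Q}_{k}(\sigma_{a'}(s''), \bar{a}')$; (iv) observe that the resulting inner quantity is constant across $s''$ within the same class, pull it outside the inner sum, and collapse the remaining $\mathcal{P}$-sum into $\bar{\mathcal{P}}(\sigma_{a}(s), g_{s}(a), \sigma_{a}(s'))$ by (\ref{eqn:transition_equivalence}). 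The outer sum over blocks then becomes the Bellman optimality backup in $\bar{\mathcal{M}}$, yielding $\bar{Q}_{k+1}(\sigma_{a}(s), g_{s}(a))$.

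The main obstacle will be step (iii)--(iv): because the state mapping is action-indexed, the inner $\sigma_{a'}(s'')$ varies as one maximizes over $a'$, and one must argue that $\bar{Q}_{k}(\sigma_{a'}(s''), g_{s''}(a'))$ is well-defined and constant as $s''$ ranges over an equivalence class, so that the factor can be pulled outside the inner sum over $s''$. This is essentially a well-definedness check on the action-dependent state mapping and requires verifying that $\sigma$ and $g$ respect the partition $B_{h}|\mathcal{S}$ coherently across the iteration. A related subtlety is showing that $\max_{a' \in \mathcal{A}}$ can be safely replaced by $\max_{\bar{a}' \in \bar{\mathcal{A}}}$; surjectivity of $g_{s''}$ ensures every abstract action is achievable, and the inductive hypothesis ensures that actions mapping to the same $\bar{a}'$ share the same iterate value, so the two maxima agree.

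Once these bookkeeping points are dispatched, the induction closes and $Q_{k} \to Q^{*}$, $\bar{Q}_{k} \to \bar{Q}^{*}$ uniformly (since $\gamma < 1$ makes each operator a $\gamma$-contraction in the sup-norm), giving $Q^{*}(s,a) = \bar{Q}^{*}(\sigma_{a}(s), g_{s}(a))$ in the limit and establishing the theorem.
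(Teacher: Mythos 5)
The paper does not prove this theorem itself---it is imported from \citet{ravindran2004algebraic}, with only a one-sentence remark that the proof survives the extension to action-dependent state maps $\sigma_{a}$---and your fixed-point/induction argument over Bellman iterates is precisely the standard proof in that reference, so your approach matches the cited one. The single step you flag but do not actually close (that $\max_{a'}Q_{k}(s'',a')$ is constant as $s''$ ranges over a block $[s']_{B_{h}|\mathcal{S}}$ once the state map depends on the action) is exactly the point on which the paper's extension silently relies, so your sketch is, if anything, more explicit than the paper about where the real work lies.
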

Consequently, an agent equipped with a homomorphic map can learned values in the abstract state-action space and the used these to predict values in the experienced state-action space \citep{ravindran2001symmetries}. Note that in addition to the assumptions discussed next, theorem \ref{theorem:EEA_is_homomorphism} states that equivalent effect abstraction requires exact models, meaning predicting transitions without any prediction error. This is usually not feasible, which motivates our empirical work in sections [\ref{cartpole_section}-\ref{minatar}] to see how our approach behaves when the model is imperfect and learned from experience.
\begin{figure*}[htp]            
\centering
\subfigure{\includegraphics[width=0.4\textwidth]{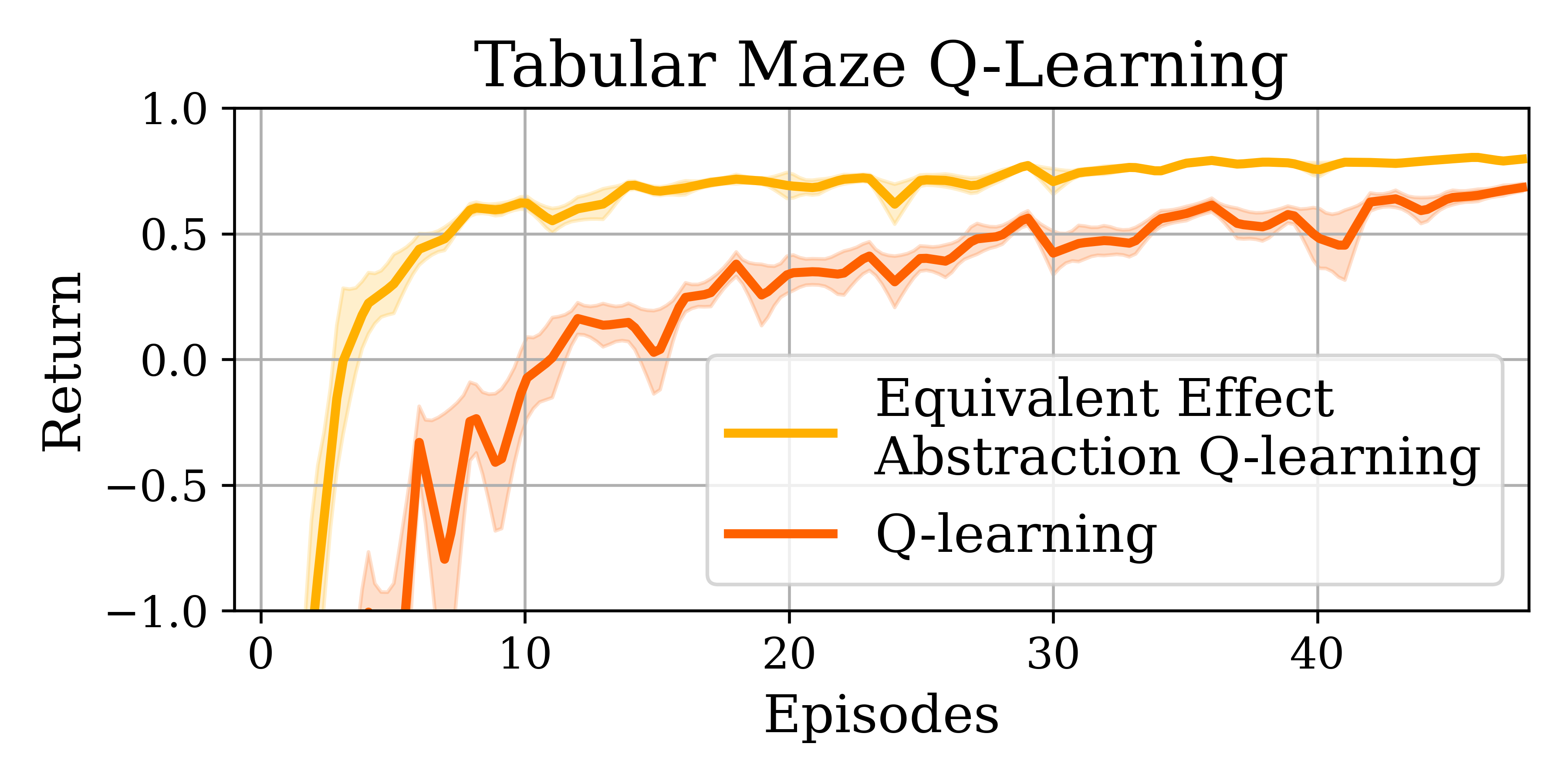}}\quad
     \subfigure{\includegraphics[width=0.4\textwidth]{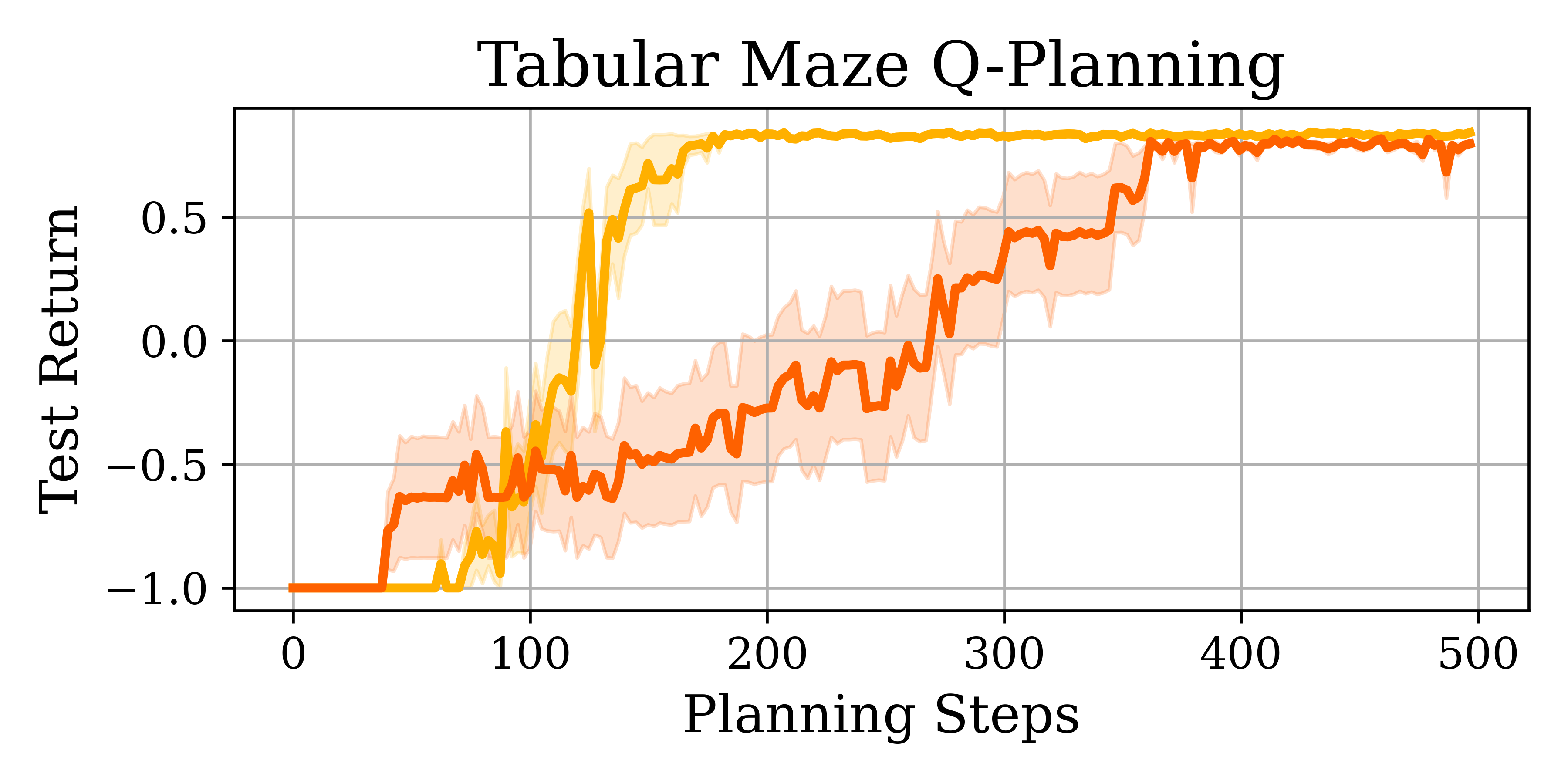}}    
     \caption{$\mathbf{(a),(b)}$ In a gridworld, equivalent effect abstraction improves sample efficiency. Equivalent effect abstraction improves both model-free Q-learning and model-based approaches Q-planning. 50 seeds are used for Q-learning while 10 seeds are used for Q-planning.}\label{gridworld_figure}
\end{figure*}
\begin{algorithm}[tb]
   \caption{Equivalent Effect Abstraction Value Prediction}
   \label{alg:example}
\begin{algorithmic}
   \REQUIRE Forward Model $f_{\theta}$, Backward Model $b_{\phi}$, List of $N$ environment actions $A_{N}$, hypothetical action $a_{hyp}$, environment state $s$, value function $Q$
   \ENSURE List of predicted Q-values $q_{N}$
   \STATE Initialize list of Q-values $q_{N}$
   \FOR{$i=0 \; \text{to} \; N$}
   \IF{$A_{N}[i]=a_{hyp}$}
   \STATE $q_{N}[i] \gets Q(s, a_{hyp})$
   \ELSE
   \STATE $s' \gets f_{\theta}(s, A[i])$ \COMMENT{predict one step forward}
   \STATE $s_{hyp} \gets b_{\phi}(s', a_{hyp})$ \COMMENT{predict hypothetical state}
   \STATE $q_{N}[i] \gets Q(s_{hyp}, a_{hyp})$
   \ENDIF
   \ENDFOR
\end{algorithmic}
\end{algorithm}
\subsection{How limiting are the assumptions introduced?}\label{assumptions_discussion}
It is unlikely that all assumptions will be met completely in the deployment of equivalent effect abstraction. The remainder of this work studies empirically how the performance varies as assumptions [\ref{ass:one}-\ref{ass:two}] are violated.

\textbf{Most systems in the real world have some level of stochasticity in their dynamics.} While theorem \ref{theorem:EEA_is_homomorphism} relies on assumptions of determinism, we find empirically that allowing our models to predict the mean of stochastic transitions, and then using these predictions in algorithm \ref{alg:example} enables faster value learning (see section \ref{predator_prey} and \ref{minatar}). In future work, equivalent effect abstraction could be extended to perform predictions on latent distributions or aggregate samples from such distributions \cite{hafner2019learning, yu2021learning}.

\textbf{Regarding assumption \ref{ass:three} that previous states are always computable.} In some environments like Cartpole, it is always possible to predict a valid previous state given any hypothetical action and any current state, meaning assumption \ref{ass:three} always holds. In other environments backwards state predictions are not always possible such as near a border in a 2D navigation setting. 

One strategy to avoid uncomputable hypothetical states is to select hypothetical actions that minimize violations of assumption \ref{ass:three} (i.e. actions that are plausible previous actions for many states in the environment). For example, in many video games and physics simulators there an action which is a ``no-op'', which usually leaves states unchanged. Additionally, a practitioner can choose not to collapse a portion of the action space. This is the approach we apply in section \ref{minatar} on Space Invaders and Seaquest, where we can deduce beforehand that there will be no hypothetical state-action pairs for states where the ``fire'' action is chosen. In this case, we predict the Q-values for the fire action separately and then collapse all the other action values onto the hypothetical action.

Additionally, when uncomputable states are still present, we find that allowing our models to hallucinate a previous state that could have lead to the current state allows for reasonable value predictions to be made in section \ref{minatar} (even if there is no ground truth hypothetical state that is plausible given environment dynamics). In future work, this assumption could be addressed more directly, by predicting whether a previous action can lead to a valid state transition for a given current state as in \cite{yu2021learning}. If border states are known (section \ref{gridworld}) one call fall back to full Q-learning in that particular state. 

\textbf{Lastly, our theoretical results assume that rewards only depend on the current state} and not the previous state and action (see assumption \ref{ass:two}). While this limits our applicability in optimal control settings \citep[p. 145]{liberzon2011calculus}, we posit that there are many environments where our approach is applicable, including standard benchmarks such as Atari \cite{bellemare2013arcade} or Deepmind Control \cite{tassa2018deepmind}. Note that Deepmind control would require an extension of equivalent effect abstraction into continuous action spaces. Lastly, in future work the difference in costs for actions could be estimated and corrected for. 
\section{Experiments}\label{experiments}
 Our experiments gradually build up in their complexity---with each successive experiment, we investigate how performances changes when the reliance on our assumptions is further relaxed. Where possible, we overlap with the MDP homomorphism literature---using Cartpole to overlap with \citet{van2020mdp, van2020plannable} and Predator Prey to overlap with \citet{van2020mdp}. Hyperparameters are in the Appendix. Shaded regions indicate the standard error.
\subsection{Tabular Maze}\label{gridworld}
\textbf{We begin an idealised environment where all our assumptions are satisfied}---the maze environment from \citet[p. 165]{sutton2018reinforcement}. The maze consists of $6\times9$ cells. The agent starts on the far west of the maze and must move east around walls in the middle of the environment. In this experiment, we assume a perfect model of the environment is known beforehand---which we leverage to create a homomorphic map. 
\newline
\newline
We use an open-source Q-learning \citep{watkins1992q} and environment  implementation \citep{dynamaze}. We adapt this implementation to make use of a homomorphic map as shown in algorithm \ref{alg:example}. We only apply algorithm \ref{alg:example} to non-border states. In border states, we use standard Q-learning value predictions and maintain a table of values for all border state-action pairs. 

Our approach compares favorably to Q-learning, converging much faster to the optimal policy (figure \ref{gridworld_figure}(a)). In the gridworld environment, we reduce the size of the state-action space and hence the number of Q-values (when going left is the hypothetical action) from $216$ to $67$. In this case, the number of states in the reduced action space is not reduced by the cardinality of the action space exactly. This is because of the 13 states where it is not possible to reach them by traveling left because there is a border to the right.
\begin{figure*}[htp]\label{figure:cart_pred}
     \centering
     \subfigure{\includegraphics[width=0.3\textwidth]{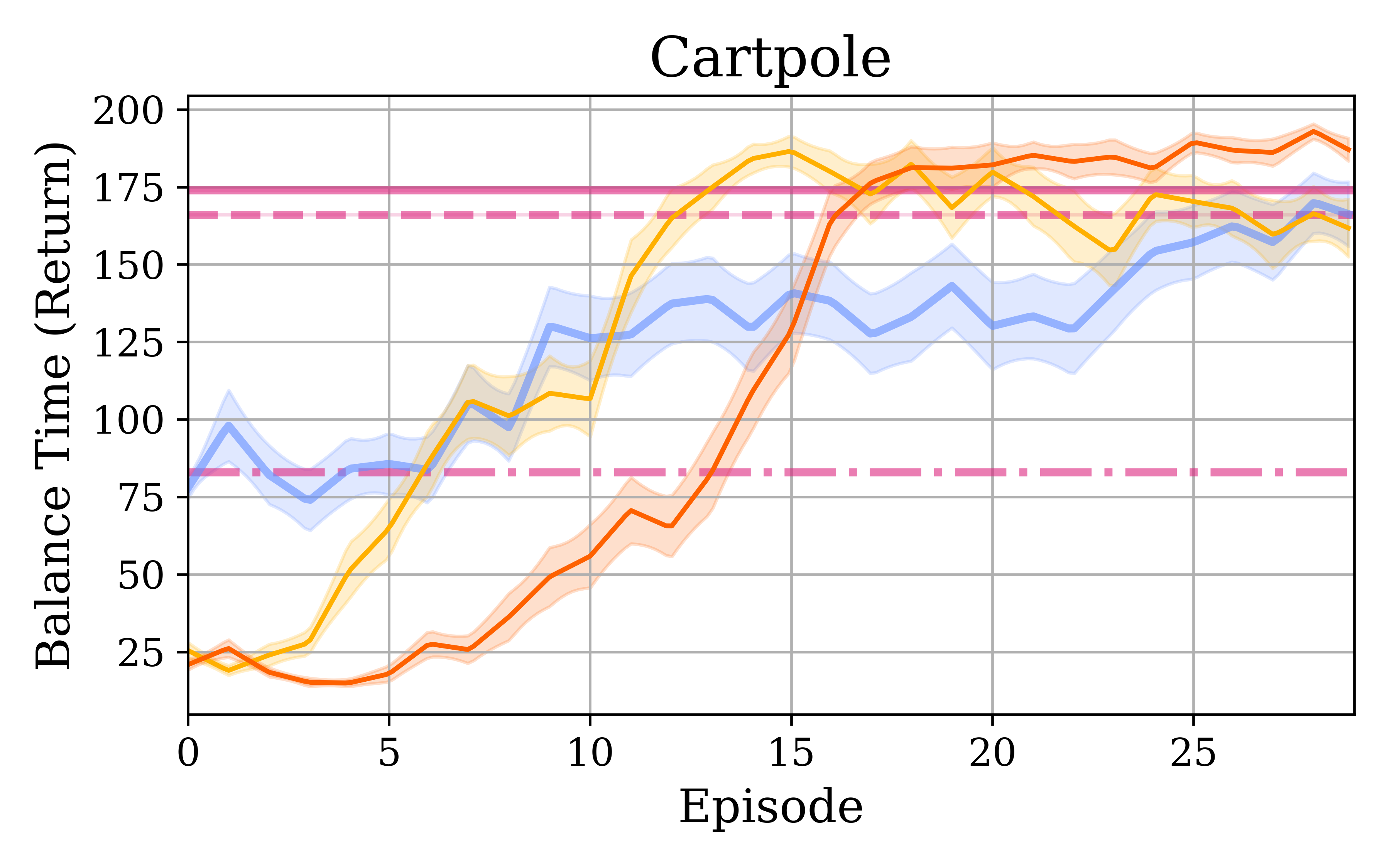}\label{fig:cartpole}}\quad
     \subfigure{\includegraphics[width=0.3\textwidth]{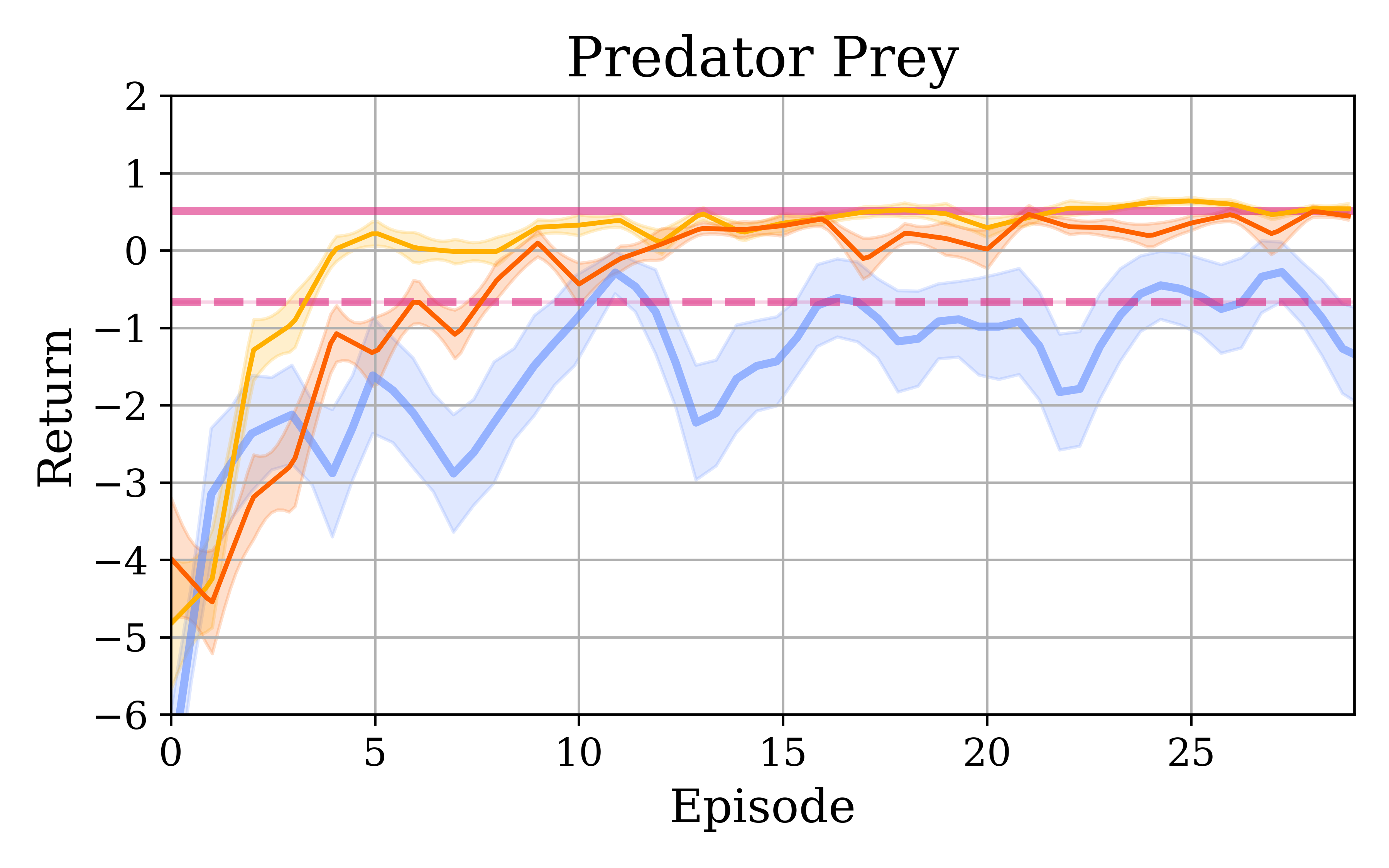}}
     \subfigure{\raisebox{0.36\height}{\includegraphics[width=0.12\textwidth]{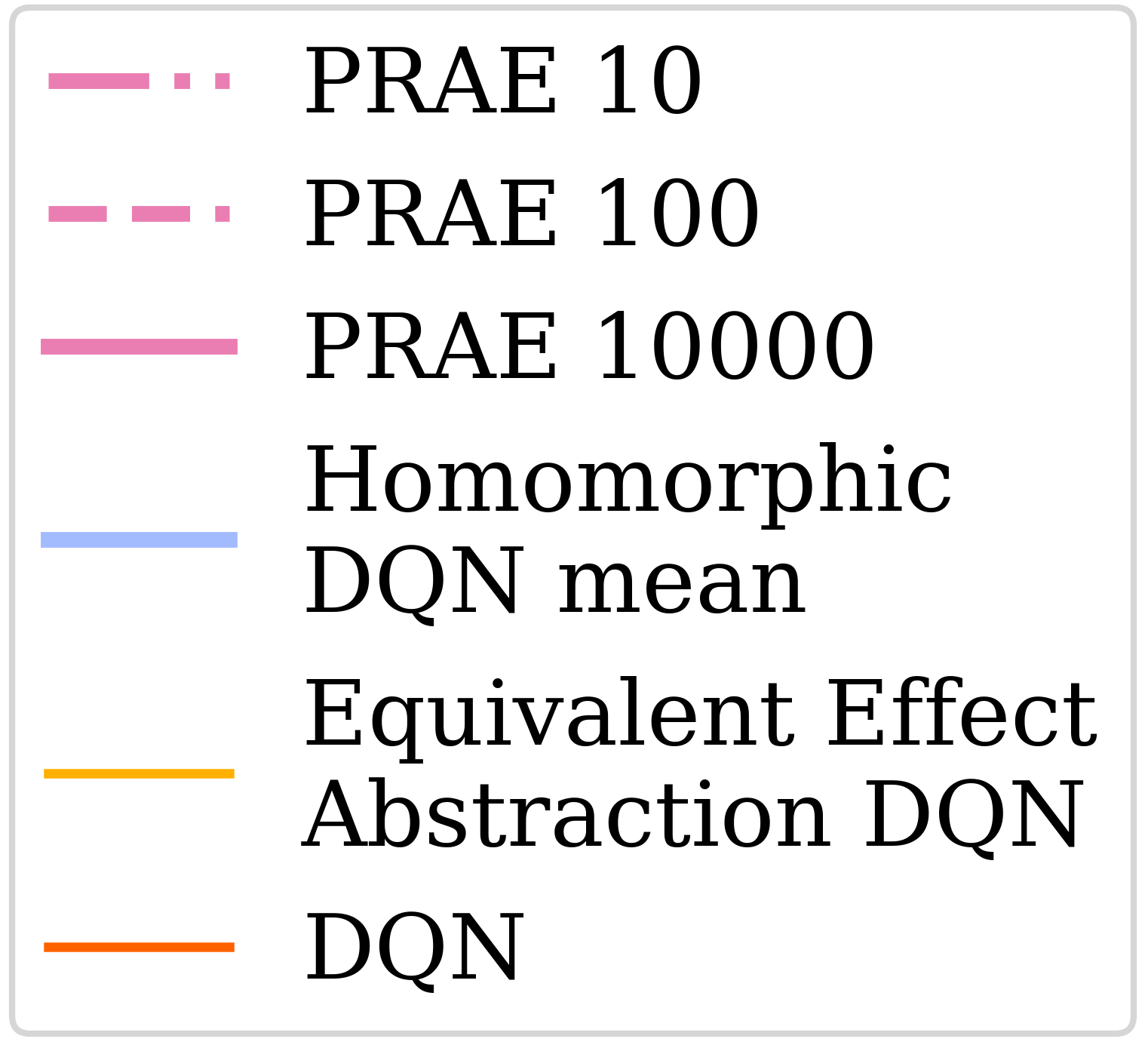}}}
    \caption{$\mathbf{(a, b)}$ Equivalent effect abstraction learns more quickly than the DQN baseline. For Cartpole, dynamics models are learned in 3 initial episodes of experience that are shown on the x-axis of the plot. For predator prey, the experience used to train the dynamics models is not included in the plot. PRAE 10 refers to using PRAE with 10 episodes of data to construct an environment model, which it then plans in. For predator prey the equivalent effect abstraction results reuse one set of pre-trained backwards and forwards models for each RL run.}\label{fig:predatorprey}
\end{figure*}
\subsection{Cartpole}\label{cartpole_section}
\textbf{Testing performance when the dynamics models are learned and therefore imperfect.} We apply equivalent effect abstraction to the Cartpole environment \citep{brockman2016openai}. In Cartpole, an agent controls a cart with a pendulum attached that it must balance upright. The states are a four-dimensional vector $(\text{position}, \text{velocity}, \text{angle}, \text{angular velocity})$ and the action space is discrete (move the cart left or right). To learn our models, we train on 3 episodes of experience at the start of training (two simple linear models, one for each action optimised with Adam \citep{kingma2015adam}). These initial training steps \emph{are} included for equivalent effect abstraction in figure \ref{fig:cartpole}.
We apply our abstraction in a DQN implementation \citep{silver2016mastering, cartpolepy}, where Q-values are only learned for an arbitrary hypothetical action.

We compare to three baselines: DQN, MDP homomorphic Networks \citep{van2020mdp} and PRAE \citep{van2020plannable}. For all experiments we use the baselines' implementations \citep{stooke2019rlpyt}. MDP homomorphic networks use weights that are equivariant to symmetries (requiring prior knowledge). PRAE trains a contrastive model to map to a latent ``plannable'' MDP, that satisfies the definitions of an MDP homomorphism. PRAE then plans in the learned abstract MDP. For Cartpole, we learned a mapping for PRAE with datasets of $[10, 100, 10000]$ episodes of random experience. We plot the converged performance of PRAE's planning algorithm. For the Cartpole and Predator Prey experiments (next section) we use 30 seeds for DQN, equivalent effect abstraction and the homomorphic DQN. For PRAE, we only use 5 seeds due to its computational expense. 
\newline
\newline
As shown in figure \ref{fig:cartpole}, in the low sample regime we improve upon both PRAE and DQN---with our approach reaching a score of 175 at around episode 12 while Q-learning takes around 16 episodes to converge---note that this improvement includes the number of episodes required to learn our mapping. The MDP homomorphic network also learns quickly, using prior knowledge of environment symmetries, but takes longer to converge.
\subsection{Stochastic Predator Prey}\label{predator_prey}
\textbf{Testing performance when dynamics models are learned and the dynamics are stochastic.} Following the seminal work of \citet{van2020mdp}, we benchmark on the Predator Prey environment, where an agent must chase a stochastically moving prey in a 2D world. The observations are $7\times7\times3$ tensors.
\newline
\newline
We train action-dependent forwards and backwards models on a dataset of transitions created by taking random actions in the environment for $10^4$ environment steps (equivalent to around $170$ episodes). Note that the environment is stochastic, so learning a perfect environment model is impossible. We compare to the baselines introduced in section \ref{cartpole_section}. For PRAE, we benchmark with 10000 and 100 episodes of data and perform planning to convergence. 
\begin{table*}
\centering
\small
\caption{Normalized scores for MinAtar 250K}
\label{table:minatar_250k}
\begin{tabular}{cccccc}
\toprule
game & actions removed & DQN & EEA & DYNA & homomorphic DQN \\
\midrule
Asterix & 4 & 0.093 ± 0.015 & \textbf{0.3 ± 0.056} & 0.05 ± 0.0058 & 0.076 ± 0.0099 \\
Breakout & 2 & 0.71 ± 0.052 & 0.77 ± 0.036 & 0.72 ± 0.046 & 0.65 ± 0.077 \\
Freeway & 2 & 0.7 ± 0.02 & 0.72 ± 0.023 & 0.037 ± 0.025 & 0.018 ± 0.012 \\
Seaquest & 4 & 0.031 ± 0.0074 & \textbf{0.11 ± 0.02} & 0.039 ± 0.0056 & 0.046 ± 0.0043 \\
Space Invaders & 2 & 0.69 ± 0.028 & 0.66 ± 0.032 & 0.79 ± 0.079 & 0.71 ± 0.061 \\
\bottomrule
\end{tabular}
\end{table*}
In Predator Prey, a policy is learned with around 5 episodes with a DQN. As a result, in this particular environment it is not practical to use equivalent effect abstraction as by the time a model can be learned the policy has already converged. However, figure \ref{fig:predatorprey}(b) shows that if a model can be obtained before policy learning, then equivalent effect abstraction can deliver an improvement over DQN---even though the environment is stochastic.
\subsection{MinAtar}\label{minatar}
\textbf{Testing performance when models are learned, dynamics are stochastic, and hypothetical states may be incomputable.} Next, we benchmark on MinAtar \citep{young19minatar}---a more accesible version of the Atari benchmark \citep{bellemare2013arcade} where observations are image-like tensors. Minatar is relatively high dimensional compared to previous sections [\ref{gridworld}-\ref{predator_prey}] and the homomorphisms literature \citep{van2020mdp}. For example, Seaquest has observations of $10\times10\times10$ tensors. Further, MinAtar has sticky actions \citep{machado2018revisiting}---meaning predicting the next state perfectly is impossible. Lastly, MinAtar games contain border states where predicting the equivalent hypothetical state is impossible.
\newline
\newline
\noindent \noindent We do not make any corrections for stochasticity or impossible to compute hypothetical border states. The only inductive bias we inject is the hard-coding of the hypothetical actions. For Asterix and Breakout, we select ``no-op'' as our hypothetical action, Freeway uses ``up'' as its hypothetical action while in Seaquest and Space Invaders we use two actions in the reduced MDP ``fire'' and ``no-op'' (because there is equivalent moving/no-op action for fire we collapse all the movement actions onto no-op and leave fire value predictions untouched). We train a U-Net style architecture \citep{ronneberger2015u} for our dynamics models (see appendix \ref{minatar_arch} for details on architecture). We train these models continuously in parallel with our Q-network, making model updates with samples from the replay buffer after each Q-learning step.
\begin{table*}
\centering
\small
\caption{Peak performance across all frames}
\label{table:minatar_peak_performance}
\begin{tabular}{cccccc}
\toprule
game & actions removed & DQN & EEA & DYNA & homomorphic DQN \\
\midrule
Asterix & 4 & \textbf{1.0 ± 0.059} & 0.85 ± 0.083 & 0.67 ± 0.047 & 0.69 ± 0.10 \\
Breakout & 2 & 1.2 ± 0.098 & 1.0 ± 0.11 & 1.0 ± 0.17 & 1.1 ± 0.11 \\
Freeway & 2 & \textbf{1 ± 0.019} & 0.96 ± 0.015 & 0.79 ± 0.022 & 0.97 ± 0.0064 \\
Seaquest & 4 & 1.1 ± 0.17 & 0.91 ± 0.18 & 1.1 ± 0.26 & 0.80 ± 0.13 \\
Space Invaders & 2 & 1.2 ± 0.090 & 1.1 ± 0.093 & 0.79 ± 0.079 & \textbf{1.4 ± 0.099} \\
\bottomrule
\end{tabular}
\end{table*}
\begin{figure}
  \begin{center}
    \includegraphics[width=0.4\textwidth]{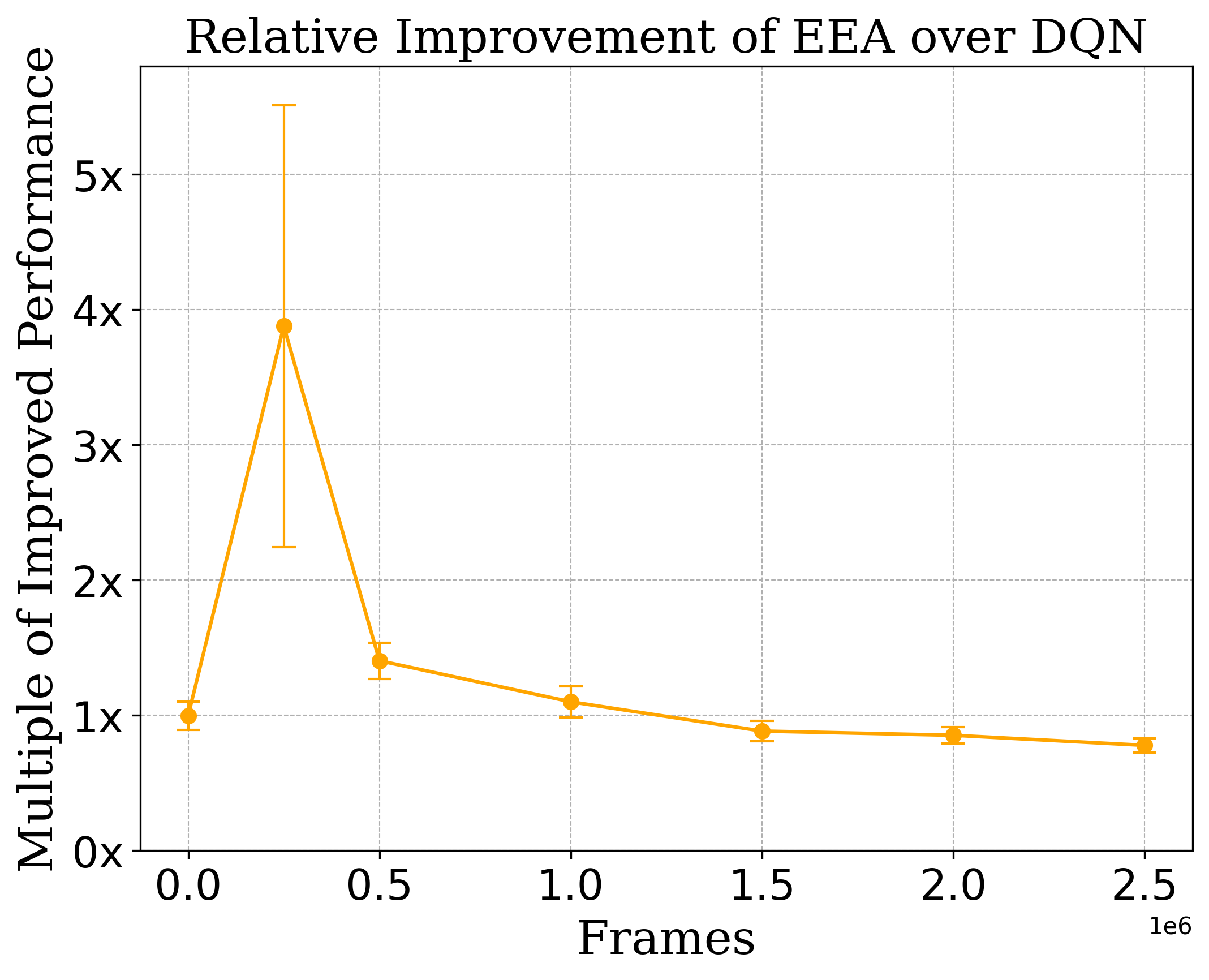}
  \end{center}
  \caption{EEA delivers an almost 4x improvement at 250k frames when averaging over all games and optimizers.}
  \label{fig:minat_relative_improvement}
\end{figure}
\newline
\newline 
We include three different baselines, a DQN implementation that we build upon \citep{young19minatar}, a homomorphic DQN \citep{van2020mdp} and a model-based Dyna-DQN agent \citep[p. 164]{sutton2018reinforcement}. The Dyna agent that augments the DQN baseline with additional experience from a learned model for a direct comparison with equivalent effect abstraction. Our Dyna agent makes forward state, reward and episode termination predictions---it then augments its value network experience with these rollouts. While other model-based baselines exist (e.g. \citet{hafner2019learning}), they are on-policy and actor-critic based making them an unfair comparison to our off-policy value-based DQN. We also note that the state of the art in discrete control is a off-policy value based agent, which outperforms all model-based approaches in the low-sample regime \citep{schwarzer2023bigger}. As a result, improving the sample efficiency of off-policy value-based methods could to advance the overall state-of-the-art.
\newline
\newline
We report normalised scores, dividing the score by the performance of a DQN after $2.5 \times 10^{6}$ frames. Inspired by \citep{kaiser2019model}, we benchmark performance at 250k frames. At 250k frames, equivalent effect abstraction outperforms all baselines on 2/5 games while matching the other best algorithm's performance on 2/5 games and only being significantly worse than than the best algorithm (Dyna) on one game. While all games have 6 actions, the number of \textit{effective} actions may be smaller. For example, in Freeway only up, down and ``no-op'' are relevant---meaning the number of actions collapsed by equivalent effect abstraction is only 2. We outperform other methods when the number of effective actions removed is large.

In table \ref{table:minatar_250k}, equivalent effect abstraction (EEA) and DQN performances are quoted when using Adam, the overall best optimizer in the low sample limit for EEA and DQN. All other methods report performance with the default RMSprop. In some environments, like Seaquest and Asterix, using Adam performs worse than RMSprop for both DQN and EEA (see figure \ref{fig:full_minatar_results} for a full breakdown). 

To disentangle the effects of specific optimizers on different games from the effect of using EEA, we ask the following: how much of an improvement does EEA give when compared to DQN when averaging over all games and optimizers? To do this we compute the ratio of EEA RMSprop performance to DQN RMSprop performance and the ratio of EEA Adam performance to DQN Adam performance for each game. After, we average these ratios over all games and plot them against training experience in figure \ref{fig:minat_relative_improvement}, showing on average \textbf{an almost 4x improvement over DQN in the low sample limit}. Error bars are standard error.
\newline
\newline
\textbf{Tradeoffs of equivalent effect abstraction in the large sample limit.} EEA maintains respectable performance as the number of samples increases but does not improve overall performance in the large sample limit. Results in table \ref{table:minatar_peak_performance} use the RMSprop optimizer for DQN and EEA as it performs best given a large sample limit. We attribute EEAs weaker performance here to violations of the assumptions we made in the theoretical development of our method. In summary, when samples are scarce, equivalent effect abstraction can improve performance. On the other hand, stochasticity, imperfect models and uncomputable hypothetical states mean that baselines overtake equivalent effect abstraction as the number of samples increases.
\section{Related Work}\label{related_work}
Work on MDP homomorphisms was initialised by \citet{ravindran2001symmetries} who developed a framework for abstraction under symmetry with theoretical guarantees. Contemporaneously, \citet{givan2003equivalence} proposed model minimisation \citep{larsen1991bisimulation}, where states could be mapped to an abstract state if their transition dynamics and rewards were indistinguishable. Related approaches explore state-action equivalences where transitions pairs lead to otherwise unrelated states but with the same probabilities \citep{asadi2019model, lyu2023scaling}. For a more general taxonomy of state-action abstractions, \citet{abel2020value} define related abstractions and their characteristics. 

In contrast to our approach, homomorphisms are often used in tasks with symmetries. \citet{van2020mdp} learn network weights that are equivariant to environment symmetries but require a practitioner to indentify symmetry groups beforehand. Similar approaches apply to environments with continuous symmetries \citep{wang2022so}. \citet{biza2019online} approach the problem of finding MDP homomorphisms using online partition iteration---predicting which partition a state should fall into given an action, and refining the partitions through splitting. 
\newline
\newline
Another class of MDP abstraction methods use bisimulation metrics (which compute distances between trajectories), often in conjuction with a contrastive loss function. For example, \citet{biza2021learning} use bisimulation to compute hidden Markov model priors to infer a reduced state space. \citet{ferns2011bisimulation, 10.5555/3020751.3020774}, condition embeddings with contrastive losses to generate invariant representations. \citet{zhang2020learning} augment an embedding encoder with an additional contrastive loss term that results in encoder distances forming an approximate bisimulation metric, which allowing agents to perform control tasks in the presence of distractors. \citet{rezaei2022continuous} formalise continuous homomorphisms with a bisimulation metric. In general, contrastive learning achieves impressive results but can be sample hungry \citep{van2020plannable}. 

Perhaps the most relevant related approach is the use of afterstates \citep[p. 136]{sutton2018reinforcement}. Afterstates shift an MDP out of phase with conventional transitions, creating states in-between the effect of a policy's action and the reaction of the environment. Applications of afterstates have been constrained to board games \citep{tesauro1995temporal} and focus on dealing with the stochasticity of an opponent rather than improved sample efficiency \citep{antonoglou2021planning}. Also related, \citet{misra2020kinematic} deploy a similar concept, finding states with equivalent transition dynamics by grouping together state-action pairs that will pass through or have passed through same state with the same previous action in block MDPs, where a discretisation of the state space needs to be learned. Our approach goes beyond this, looking at abstracting state-action pairs that do not necessarily have equivalent dynamics for the same actions but instead for different actions. Also related, \cite{grinsztajn2022better} learn to explore by avoiding action sequences that lead to the same next state.
\newline
\newline
More broadly, model-based RL has enabled superhuman performance in Atari \citep{hafner2020mastering}, but the number of planning steps required to learn a policy is still large. Backwards models have been proposed to improve representation learning \citep{yu2021learning, yu2021playvirtual}, which would be interesting if integrated with equivalent effect abstraction. 
\section{Future Work}\label{limitations}
While many of the recent breakthroughs are value based (e.g. \citep{silver2017mastering, badia2020agent57}), actor-critic approaches \citep{schulman2017proximal, mnih2016asynchronous} are often the natural choice for control tasks. Embedding equivalent effect abstraction into actor-critic architectures is a fruitful avenue for future research. It is also conceivable to formulate equivalent effect abstraction within a continuous action space by simply discretising the action space \citep{banino2021coberl}. Lastly, equivalent effect abstraction could also be integrated into existing homomorphic MDP methods that rely on symmetries to further reduce the size of the abstract state-action space \citep{van2020mdp}.

\section{Conclusion}\label{conclusion}
Equivalent effect abstraction is a simple method that reduces the size of a state-action space and requires no prior knowledge of environment symmetries. We show equivalent effect abstraction can improve the sample efficiency of policy learning in tabular environments, control tasks with continuous state spaces, and stochastic deep RL environments. An exciting next step would be to integrate equivalent effect abstraction into model-based RL \citep{hafner2020mastering, yu2021learning} to improve planning efficiency.
\bibliography{example_paper.bib}
\bibliographystyle{apalike}
\newpage
\appendix
\section{Appendix}
\subsection{Impact Statement}
We do not believe that EEA introduces any novel dangers or concerns in terms of broader impact. Like all reinforcement learning algorithms, our agents likely do not generalise very well and could behave unexpectedly when deployed in real world environments. We suggest extreme caution when using any RL algorithm for consequential applications.

\subsection{Theoretical Results}
\textbf{Theorem \ref{theorem:EEA_is_homomorphism}} \textit{Given assumptions [\ref{ass:one}-\ref{ass:three}], the MDP mapping provided by equivalent effect abstraction is an MDP homomorphism.} 
\begin{proof}\label{proof:homomorphism}
We can use our definition of state and action mappings in the left hand side.
\begin{equation}\label{eqn:plug_in_transition}
        \mathcal{P}(b_{\phi}(f_{\theta}(s, a), a_{hyp}), a_{hyp}, s')=\sum_{s''\in [s']_{B_{h}|\mathcal{S}}}\mathcal{P}(s, a, s'')
\end{equation}
Where we use the fact that the dynamics of the reduced MDP are equal to that of the full MDP (except with a reduced action space), meaning $\bar{\mathcal{P}}=\mathcal{P}$. Using assumption \ref{ass:three}, which states that there exists a hypothetical state action pair $(s_{hyp}, a_{hyp})$ that transitions into $s'$, equation \ref{eqn:plug_in_transition} reduces to the following.
\begin{equation}\label{eqn:transition_proof_penultimate}
    \mathcal{P}(s_{hyp}, a_{hyp}, s')=\sum_{s''\in [s']_{B_{h}|\mathcal{S}}}\mathcal{P}(s, a, s'')
\end{equation}
Assumption \ref{ass:one} means our MDP assumes deterministic dynamics, which means allows equation \ref{eqn:transition_proof_penultimate} to be rewritten without the summation.
\begin{equation}\label{eqn:transition_proof_final}
    \mathcal{P}(s_{hyp}, a_{hyp}, s') = \mathcal{P}(s, a, s')
\end{equation}
Given our assumptions, both sides of \ref{eqn:transition_proof_final} are equal to one if $(s, a, s')$ is a valid environment transition, otherwise both sides are equal to zero. Finally, one can see from the defintion \ref{definition:EEA}, that the following is true by construction.
\begin{align}
    \bar{\mathcal{R}}(s_{hyp}, a_{hyp}, s')
    =\mathcal{R}(s, a, s')
\end{align}
As a result, equivalent effect abstraction obeys both the transition and reward function requirements of an MDP homomorphism.
\end{proof}
\subsection{Hyperparameter Search}\label{hyperparam}
Below we show the hyperparameters swept through for homomorphic MDP, DQN and Equivalent Effect Abstraction agents, broken down by environment. The PRAE architectures were generally kept the same as the hyperparameters provided in \citep{van2020plannable}, with the exception of the learning rate which we evaluate at $0.0001, 0.001$ and $0.1$
\subsubsection{Sutton and Barto Tabular Gridworld}
We use the hyperparameters specified in \citep[p. 165]{sutton2018reinforcement}: namely, Learning Rate$= 0.1$, $\gamma = 0.95$ and $\epsilon = 0.1$

\subsubsection{Cartpole}
\begin{table}[h!]
\label{sample-table}
\begin{center}
\caption{Hyperparameters swept through for the Cartpole environment. Learning rate decay refers to decaying the learning rate by a factor of ten at after a specified number of episodes have elapsed.}

\begin{tabular}{ll}                                       
\toprule
\multicolumn{1}{c}{\bf Hyperparameter}  &\multicolumn{1}{c}{\bf Values}          \\ \midrule \\ 
Learning Rate        & $
0.00001, 0.0001, 0.001, 0.01$ \\ 
$\epsilon$ decay schedule             & No decay, exponential $\tau = \frac{-1}{200}$ \\ 
$\gamma$             & $0.8, 0.99$\\
Activation & ReLU, tanh \\
Learning Rate decay & No decay, 5, 10, 15, 20 \\
\bottomrule
\end{tabular}
\end{center} 
\end{table}                  

\textbf{Homomorphic MDP} best hyperparameters: Learning Rate = $0.001$, $\epsilon$ decay schedule = No decay, $\gamma = 0.8$, activation = tanh, Learning Rate decay = No decay
\newline
\newline
\textbf{Equivalent Effect Abstraction} best hyperparameters: Learning Rate = $0.001$, $\epsilon$ decay schedule = No decay, $\gamma = 0.8$, activation = tanh, Learning Rate decay = 10
\newline
\newline
\textbf{DQN} best hyperparameters: Learning Rate = $0.001$, $\epsilon$ decay schedule = No decay, $\gamma = 0.8$, activation = tanh, Learning Rate decay = 15
\subsubsection{Stochastic Predator Prey}

\begin{table}[h!]
\begin{center}
\caption{Hyperparameters swept through for the Predator Prey environment.}
\begin{tabular}{ll}
\toprule
   \multicolumn{1}{c}{\bf Hyperparameter}  & \multicolumn{1}{c}{\bf Values} \\ \midrule \\
Learning Rate     & 0.0001, 0.001, 0.01 \\
$\gamma$ & 0.8, 0.99 \\
\bottomrule
\end{tabular}
\end{center}
\end{table}
\textbf{Homomorphic MDP} best hyperparameters: Learning Rate = $0.001$, $\gamma = 0.99$
\newline
\newline
\textbf{Equivalent Effect Abstraction} best hyperparameters: Learning Rate = $0.01$, $\gamma = 0.8$
\newline
\newline
\textbf{DQN} best hyperparameters: Learning Rate = $0.001$, $\gamma = 0.99$
\newline
\newline
\subsubsection{Minatar}\label{minatar_hyperparams}
All hyperparameters were kept at the tuned DQN values provided in \cite{young19minatar} except for the optimizer for DQN and EEA, which we experimented with using Adam and RMSprop. For the homomorphic DQN, the symmetry groups breakdown as follows:
\newline
\newline
\begin{table}[h]
\centering
\captionsetup{justification=centering}
\caption{Symmetry groups used for the homomorphic DQN \citep{van2020mdp} on MinAtar.}
\label{your_label}
\begin{tabular}{ll}
\toprule
\textbf{Game} & \textbf{Symmetry Group} \\
\midrule
Asterix & P4 \\
Breakout & R2 \\
Freeway & R2 \\
Seaquest & P4 \\
Space Invaders & R2 \\
\bottomrule
\end{tabular}
\end{table}
\newline
\newline
The groups for the homomorphic DQN were chosen by looking at gameplay footage and estimating what symmetry groups would be most useful.
\newline
\newline
For the model learning (in both EEA and Dyna), we found a learning rate of 0.0001 with an Adam optimizer was optimal. For Dyna, we tried 1 and 3 step rollouts as well as different replay ratios and found a replay ratio of 1 and rollout length of 1 to have the highest mean overall scores.
\newline
\newline
The returns provided by training runs for MinAtar be noisy---to make the performance of the algorithms at different numbers of frames clear we pause each training run and perform 30 evaluations to get the performances we quote in the tables and figures in the main text.
\subsection{Model Architectures}
\subsubsection{Cartpole}
\begin{lstlisting}[language=iPython, caption={Homomorphic MDP Network \citep{van2020mdp}}]    
BasisLinear*(repr_in=4, channels_in=1, repr_out=2, channels_out=64)
ReLU() / tanh()
BasisLinear(repr_in=2, channels_in=64, repr_out=2, channels_out=64)
ReLU() / tanh() 
BasisLinear(repr_in=2, channels_in=64, repr_out=2, channels_out=1)
\end{lstlisting}
*$\texttt{BasisLinear}$  refers to the symmeterised layers used in \citep{van2020mdp} to create homomorphic networks. This network is identical to the Cartpole network presented in that paper, but with only one output head that outputs state-action values.

\begin{lstlisting}[language=iPython, caption={Value Network architecture for DQN and Equivalent Effect Abstraction}]    
Linear(input_size=4, output_size=1024)
tanh()
Linear(input_size=1024, output_size=1024)
tanh()
Linear(input_size=8, output_size=1024)
tanh()
Linear(input_size=1024, output_size=2)
\end{lstlisting}

\begin{lstlisting}[language=iPython, caption={Transition Model Architecture for Equivalent Effect Abstraction}]    
Linear(input_size=2, output_size=2)
\end{lstlisting}

\begin{lstlisting}[language=iPython, caption={PRAE Architectures \citep{van2020plannable}}]    
# state encoder
Linear(input_size=4 ,output_size=64)
ReLU()
Linear(input_size=64, output_size=32)
ReLU() 
Linear(input_size=32, output_size=50)
#action encoder
Linear(input_size=54 ,output_size=100)
ReLU()
Linear(input_size=100, output_size=2)
# reward prediction network
Linear(input_size=50 ,output_size=64)
ReLU()
Linear(input_size=64, output_size=1)
\end{lstlisting}

\subsubsection{Predator Prey}
\begin{lstlisting}[language=iPython, caption={Homomorphic MDP Network \citep{van2020mdp}}]    
BasisConv2d(repr_in=1, channels_in=1, repr_out=4, channels_out=4, 
filter_size=(7,7), stride=2, padding=0)
ReLU()
BasisConv2d(repr_in=4, channels_in=4, repr_out=4, channels_out=8, 
filter_size=(5,5), stride=1, padding=0)
ReLU() 
GlobalMaxPool()
BasisLinear(repr_in=4, channels_in=8, repr_out=4, channels_out=128)
ReLU() 
BasisLinear(repr_in=4, channels_in=8, repr_out=4, channels_out=128)
ReLU() 
BasisLinear(repr_in=4, channels_in=128, repr_out=5, channels_out=1)
\end{lstlisting}
This is again the same network used in \citet{van2020mdp}, albeit with a different output head.
\begin{lstlisting}[language=iPython, caption={Value Network Architecture for DQN and Equivalent Effect Abstraction}]    
Linear(input_size=441, output_size=1024)
ReLU() 
Linear(input_size=1024, output_size=8)
ReLU() 
Linear(input_size=8, output_size=1024)
ReLU() 
Linear(input_size=1024, output_size=5)
\end{lstlisting}

\begin{lstlisting}[language=iPython, caption={Transition Model Architecture for Equivalent Effect Abstraction}]    
Linear(input_size=882, output_size=512)
ReLU() 
Linear(input_size=512, output_size=8)
ReLU() 
Linear(input_size=8, output_size=512)
ReLU() 
Linear(input_size=512, output_size=441)
\end{lstlisting}

\
\begin{lstlisting}[language=iPython, caption={PRAE Architectures \citep{van2020plannable}}]    
# state encoder
Linear(input_size=441, output_size=64)
ReLU()
Linear(input_size=64, output_size=32)
ReLU() 
Linear(input_size=32, output_size=50)
#action encoder
Linear(input_size=54, output_size=100)
ReLU()
Linear(input_size=100, output_size=2)
# reward prediction network
Linear(input_size=50, output_size=64)
ReLU()
Linear(input_size=64, output_size=1)
\end{lstlisting}
\subsubsection{MinAtar}\label{minatar_arch}
\begin{lstlisting}[language=iPython, caption={Homomorphic MDP Network \citep{van2020mdp}}]
BasisConv2d(repr_in=1, channels_in=4, repr_out=2, channels_out=16,
filter_size=(3,3), stride=1, padding=0)
ReLU()
Linear(input_size=1408, output_size=256)
ReLU()
Linear(input_size=256, output_size=6)
\end{lstlisting}

\begin{lstlisting}[language=iPython, caption={Value Network Architecture for DQN and Equivalent Effect Abstraction}]    
Conv2d(channels_in=4, channels_out=16, filter_size=(3,3), stride=1, padding=0)
ReLU()
Linear(input_size=1024, output_size=128)
ReLU()
Linear(input_size=128, output_size=6)
\end{lstlisting}

\begin{lstlisting}[language=iPython, caption={MinAtar Environment Model U-net}]    
Linear(input_size=10 * 10 * num_channels + num_actions, output_size=800)
ReLU()
Linear(input_size=800, output_size=10 * 10 * num_channels)
ReLU()
Linear(input_size=10 * 10 * num_channels, output_size=10 * 10 * num_channels)
\end{lstlisting}
Where there is a skip connection from the input to output layer.
\begin{lstlisting}[language=iPython, caption={MinAtar Reward Model}]    
Linear(10 * 10 * self.num_channels + num_actions, 500)
Linear(input=500, output=10 * 10 * self.num_channels)
Linear(input=10 * 10 * self.num_channels, output=2)
\end{lstlisting}
\clearpage

\newgeometry{left=2cm,right=2cm,top=1.5cm,bottom=2cm} 
\section{Full MinAtar Results}\label{full_minatar_results}
\begin{figure*}[htp]
    \centering
    \subfigure{\includegraphics[width=0.3\textwidth]{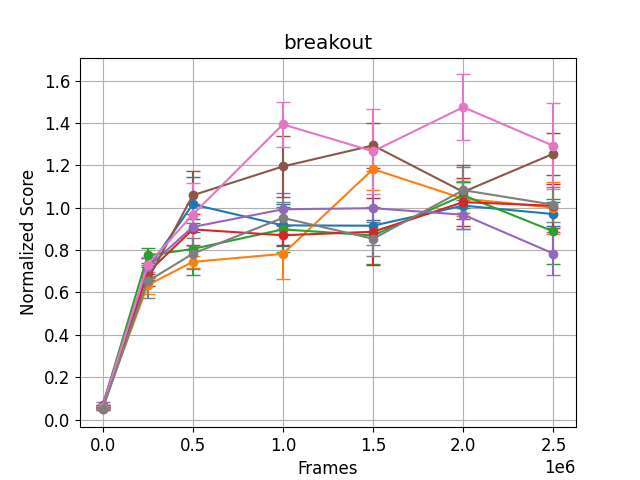}}
    \subfigure{\includegraphics[width=0.3\textwidth]{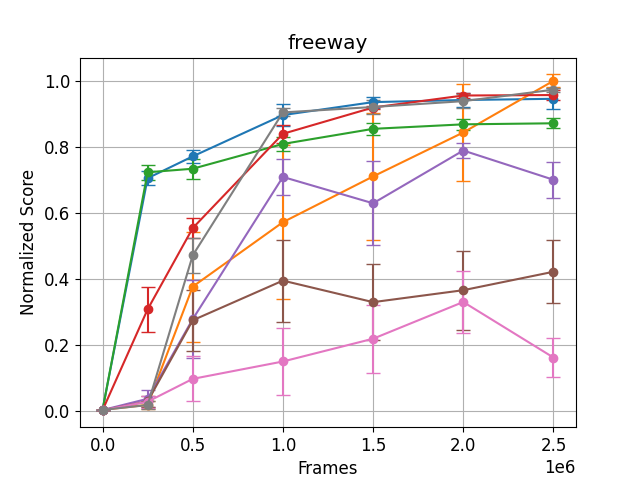}}
    \subfigure{\includegraphics[width=0.3\textwidth]{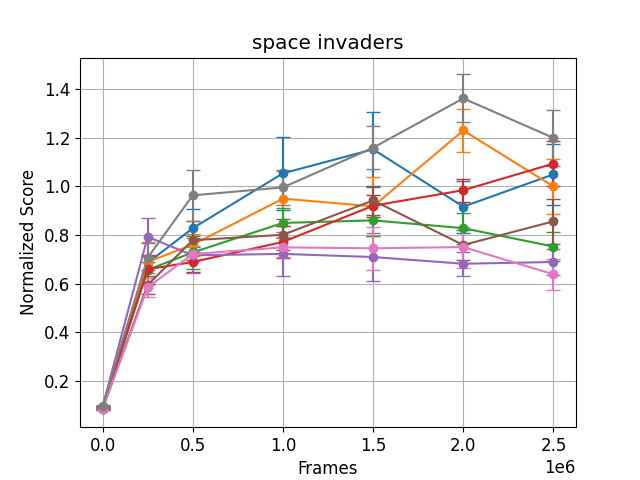}}
    \subfigure{\includegraphics[width=0.3\textwidth]{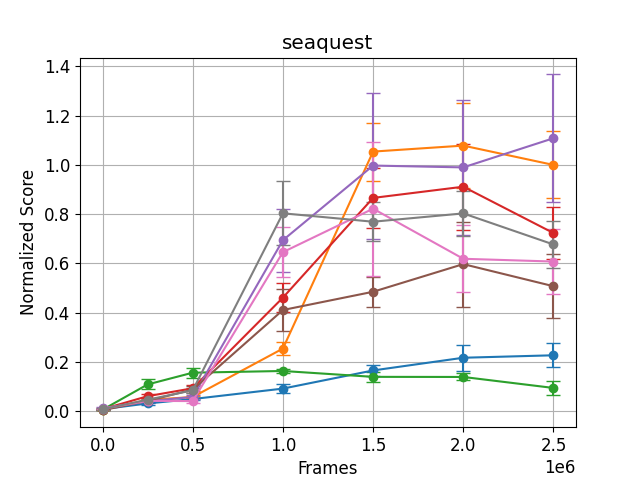}}
    \subfigure{\includegraphics[width=0.3\textwidth]{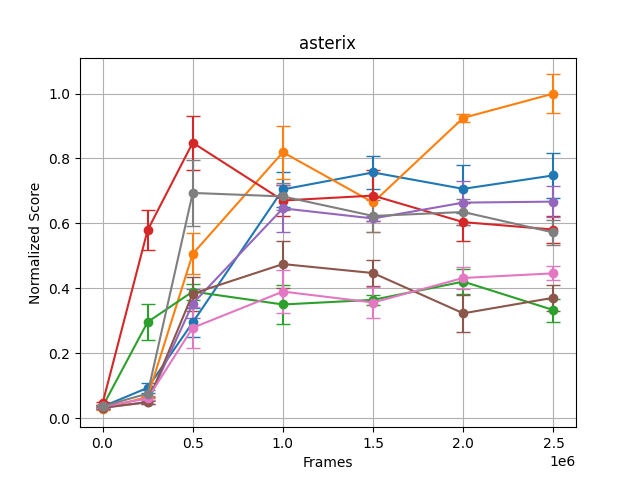}}
    \subfigure{\includegraphics[width=0.28\textwidth]{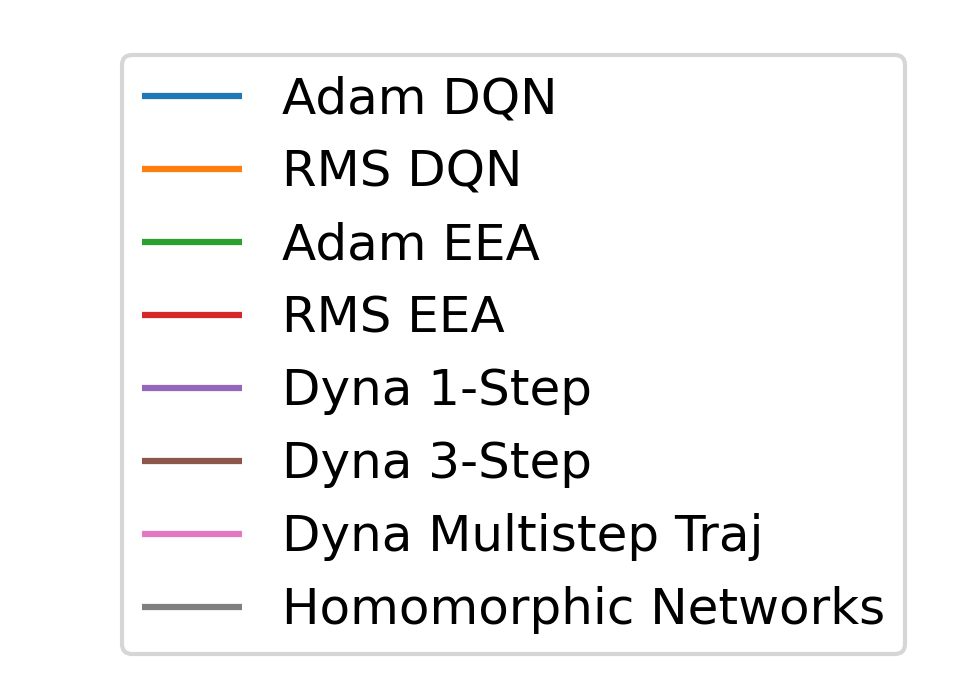}}
    \caption{Full results for all different configurations of equivalent effect abstraction and the other baselines. Equivalent effect abstraction \newline with the Adam optimizer generally performs best in the low sample limit.}
    \label{fig:full_minatar_results}
\end{figure*}
\restoregeometry

\end{document}